\newcommand{\myomit}[1]{}
\newcommand{\constraint}[1]{\mbox{\sc #1}}
\newcommand{\regular}{\constraint{Regular}\xspace}
\newcommand{\CFG}{\constraint{Grammar}\xspace}
\newcommand{\edit}{\mbox{$\mbox{\sc EditDistance}$}\xspace}
\newcommand{\Xbf}{\mbox{{\bf X}}}
\newcommand{\Ybf}{\mbox{{\bf Y}}}
\newcommand{\Zbf}{\mbox{{\bf Z}}}
\newcommand{\calL}{\mbox{\ensuremath{\mathcal L}}}
\newcommand{\WCFG}{\constraint{WeightedCFG}\xspace}
\newcommand{\gives}{\rightarrow}
\newcommand{\calA}{\ensuremath{\mathcal{A}}}
\begin{document}



\input epsf





\newcommand{\myset}[1]{\ensuremath{\mathcal #1}}

\renewcommand{\theenumii}{\alph{enumii}}
\renewcommand{\theenumiii}{\roman{enumiii}}
\newcommand{\figref}[1]{Figure \ref{#1}}
\newcommand{\tref}[1]{Table \ref{#1}}
\newcommand{\myldots}{\ldots}

\newtheorem{mydefinition}{Definition}
\newtheorem{mytheorem}{Theorem}
\newtheorem{myexample}{Example}
\newtheorem{mytheorem1}{Theorem}
\newcommand{\myproof}{\noindent {\bf Proof:\ \ }}
\newcommand{\myqed}{$\clubsuit$}
\newcommand{\simplify}{\mbox{\ensuremath{\mathit{simplify}}}}
\newcommand{\unfold}{\mbox{\ensuremath{\mathit{unfold}}}}
\newcommand{\myOmit}[1]{}
\newcommand{\tuple}[1]{\mbox{\ensuremath{\left\langle #1 \right\rangle}}}
\newcommand{\set}[1]{\mbox{\ensuremath{\left\{ #1 \right\}}}}

\title{Restricted Global Grammar Constraints.\thanks{NICTA is funded by 
the Australian Government's Department of Broadband, 
Communications,  and the Digital Economy and the 
Australian Research Council. 
}}

\author{George Katsirelos\inst{1} \and
Sebastian Maneth\inst{2}
\and
Nina Narodytska\inst{2}
\and
Toby Walsh\inst{2}}
\institute{NICTA, Sydney, Australia,
email: george.katsirelos@nicta.com.au
\and NICTA and University of NSW,
Sydney, Australia, email: sebastian.maneth@nicta.com.au, ninan@cse.unsw.edu.au, toby.walsh@nicta.com.au}

\date{27 April 2009}

\maketitle
\begin{abstract}
We investigate the global $\CFG$ constraint
over restricted classes of context free grammars like deterministic
and unambiguous context-free grammars.  We show that detecting disentailment
for the $\CFG$ constraint in these cases is as hard as parsing an unrestricted 
context free grammar. We also consider the class of linear grammars and give
a propagator 
that runs in quadratic time. 
Finally, to demonstrate the use of linear
grammars, we show that a weighted linear $\CFG$ constraint can 
efficiently encode the \edit constraint,
and a conjunction of the \edit constraint 
and the \regular constraint.

\end{abstract}
\sloppy

\section{Introduction}

In domains like staff scheduling, regulations can 
often be naturally expressed using 
formal grammars. Pesant~\cite{pesant1} introduced 
the global $\regular$ constraint 
to express problems 
using finite automaton. Sellmann~\cite{grammar2} 
and Quimper and Walsh~\cite{qwcp06} 
then introduced the global $\CFG$ constraint for context-free grammars.
Unlike parsing which only finds a single
solution, propagating such a constraint essentially
considers all solutions. Nevertheless, a propagator for the 
$\regular$ constraint runs in linear time and a propagator for the  $\CFG$ 
constraint runs in cubic time just like the corresponding parsers. 
Subsequently, there has been research
on more
efficient propagators for these
global constraints~\cite{qwcp07,QuimperAAAI08,KS08,lagerkvistthesis,gkatsiCPAIOR09}.
Whilst research has focused on regular and
unrestricted context-free languages,
a large body of work in formal language theory
considers 
grammars between regular and context-free. 
Several restricted forms of context-free grammars have been
proposed that permit linear parsing algorithms
whilst being more expressive than regular grammars. 
Examples of such grammars are LL(k), LR(1), and LALR. 
Such grammars
play an important role in compiler theory. 
For instance, yacc generates
parsers that accept LALR languages. 
%

In this paper we explore the gap between the
second and third levels of the Chomsky hierarchy
for classes of grammars which can be propagated more efficiently
than context-free grammars.
These classes of grammar are attractive because
either
they have a linear or quadratic time membership test 
(e.g., LR(k) and linear grammars, respectively)
or they permit counting of strings of given length
in polynomial time 
(e.g., unambiguous grammars). The latter
may be useful in branching heuristics.
One of our main contributions
is a lower bound on the time complexity
for propagating grammar constraints using deterministic or
unambiguous grammars. We prove that detecting
disentailment for such constraints has the same time complexity as
the best parsing algorithm for an arbitrary context-free grammar.
Using LL(k) languages or unambiguous grammars 
does not therefore improve the efficiency of propagation.  
Another contribution
is to show that linearity of the grammar
permits quadratic time propagation. 
We show that we can encode
an \edit constraint and a combination of an \edit constraint
and a \regular constraint using such a linear grammar. 
Experimental results show that   
this encoding is
very efficient in practice.

\section{Background}

A context-free grammar is a tuple $G=\tuple{N, T, P, S}$, 
where 
$N$ is a finite set of \emph{non-terminal} symbols,
$T$ is a finite set of \emph{terminal} symbols,
$P$ is a set of \emph{productions}, 
and $S\in N$ is the \emph{start symbol}. 
A production is of the form $A \gives \alpha$
where $A\in N$ and $\alpha\in(N\cup T)^+$.
The derivation relation $\Rightarrow_G$ induced by $G$ is
defined as follows:
for any $u,v\in(N\cup T)^*$, 
$uAv\Rightarrow_G u\alpha v$ 
if there exists a production $A\to\alpha$ in $P$.
Sometimes we additionally index $\Rightarrow_G$ by the
production that is applied.
The transitive, reflexive closure of $\Rightarrow_G$ is
denoted by $\Rightarrow_G^*$.
A string in $s\in T^*$ is \emph{generated} 
by $G$ if $S\Rightarrow_G^* s$.
The set of all strings generated by $G$ is 
denoted $L(G)$.
Note that 
this does not
allow the \emph{empty string} $\varepsilon$ as 
right-hand side of a production. Hence, $\varepsilon\not\in L(G)$.
This is not a restriction: we can add
a new start symbol $Z$ with productions 
$Z\to \varepsilon \mid S$ to our grammars. Our results 
can easily be generalized to such $\varepsilon$-enriched grammars.
We denote the length of a string $s$ by $|s|$.
The \emph{size} of $G$, denoted by $|G|$, is 
$\sum_{A\to\alpha\in P} |A\alpha|$.

A context-free grammar is in \emph{Chomsky form} if all productions
are of the form $A \gives BC$ where $B$ and $C$ are non-terminals or
$A \gives a$ where $a$ is a terminal.  Any $\varepsilon$-free
context-free grammar $G$ can be
converted to an equivalent grammar $G'$ 
in Chomsky form with at most a linear increase in its size;
in fact, $|G'|\leq 3|G|$, see Section~4.5 in~\cite{hopull79}.
A context-free grammar is in \emph{Greibach form} if all productions
are of the form $A \gives a\alpha$ where $a$ is a terminal and $\alpha$ is 
a (possibly empty) 
sequence of non-terminals. 
Any context-free grammar $G$ can be converted to an equivalent 
grammar $G'$ in Greibach form with at most a polynomial increase in its
size; in fact, the size of $G'$ is in $O(|G|^4)$ in general, and is in
$O(|G|^3)$ if $G$ has no chain productions of the form $A\to B$ for
nonterminals $A,B$, see~\cite{blukoc99}.
A context-free grammar is \emph{regular} if all productions are
of the forms $A\to w$ or $A\to wB$ for non-terminals $A,B$ and $w\in T^+$.


%
%
%
%
%

\section{Simple Context-Free Grammars}

In this section we show that propagating a
{\em simple} context-free grammar constraint is at least as 
hard as parsing an (unrestricted) context-free
grammar. 
A grammar $G$ is \emph{simple} if it is in Greibach form, and
for every non-terminal $A$ and terminal $a$ there is at most
one production of the form $A\to a\alpha$.
Hence, restricting ourselves
to languages recognized by simple context-free grammars 
does not improve 
the complexity of propagating a global grammar
constraint. 
Simple context-free languages are included in the 
deterministic context-free languages 
(characterized by deterministic push-down automata), and also 
in the $LL(1)$ languages~\cite{flhandbook}, so this result
also holds for propagating these classes of languages.
Given finite sets $D_1,\dots,D_n$, their 
\emph{Cartesian product language}  
$L(R_{D_1 , \ldots , D_n})$
is the cross product of the domains 
$\{a_1a_2\cdots a_n\mid a_1\in D_1,\dots,a_n\in D_n\}$.
Following ~\cite{grammar2}, we define the global $\CFG$ constraint:
\begin{definition}
\label{p:grammar_constraint}
The $\CFG([X_1,\ldots,X_n], G)$ constraint is true for an assignment variables $X$ iff
a string formed by this assignment belongs to $L(G)$.
\end{definition}

From Definition~\ref{p:grammar_constraint}, we observe that finding a support for 
the grammar constraint is equivalent to intersecting the context-free language
with the Cartesian product language of the domains. 

\begin{proposition}
\label{p:cfg_grammar}
Let $G$ be a context-free grammar, $X_1,\ldots,X_n$ be a sequence of variables with domains 
$D(X_1), \ldots, D(X_n)$. Then
$L(G) \cap L(R_{D(X_1),\dots, D(X_n)}) \neq 0$ iff
$\CFG([X_1,\ldots,X_n], G)$ has a support.
\end{proposition}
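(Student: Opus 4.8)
The plan is to prove the equivalence by directly unwinding the two definitions involved, since the statement is essentially definitional. The key observation is a natural correspondence between the strings in the Cartesian product language $L(R_{D(X_1),\dots,D(X_n)})$ and the domain-consistent assignments to $X_1,\dots,X_n$: a string $s$ lies in $L(R_{D(X_1),\dots,D(X_n)})$ precisely when it can be written as $s = a_1 a_2 \cdots a_n$ with each $a_i \in D(X_i)$, and such a factorization is exactly a choice of value in $D(X_i)$ for each $X_i$. I would make this correspondence explicit and then transport membership in $L(G)$ across it.

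For the forward direction, I would start from a string $s \in L(G) \cap L(R_{D(X_1),\dots,D(X_n)})$. Membership of $s$ in the product language gives a factorization $s = a_1 \cdots a_n$ with $a_i \in D(X_i)$; I take the assignment $X_i \mapsto a_i$. This assignment is domain-consistent by construction, and the string it forms is exactly $s$, which lies in $L(G)$. By Definition~\ref{p:grammar_constraint} this assignment satisfies $\CFG([X_1,\dots,X_n],G)$, hence is a support. Conversely, from a support I obtain a domain-consistent assignment whose induced string $s = a_1 \cdots a_n$ satisfies the constraint, so $s \in L(G)$; since each $a_i \in D(X_i)$, the same $s$ witnesses membership in $L(R_{D(X_1),\dots,D(X_n)})$, so $s$ lies in both languages and the intersection is nonempty.

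I do not expect any genuine obstacle here: the only point requiring care is checking that the map between product-language strings and domain-consistent assignments is a well-defined bijection, i.e.\ that reading off the $i$-th symbol of a length-$n$ string and selecting the value assigned to $X_i$ are mutually inverse. Note in particular that every string in $L(R_{D(X_1),\dots,D(X_n)})$ has length exactly $n$, so any element of the intersection is automatically a length-$n$ string generated by $G$ whose symbols respect the domains; this is precisely what a support encodes. Everything else is a one-line transport of the membership condition across this bijection, so the proof is short and carries no real difficulty.
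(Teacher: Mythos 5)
Your proof is correct and matches the paper's treatment: the paper states this proposition as an immediate observation from Definition~\ref{p:grammar_constraint} (it gives no separate proof), and your argument is exactly the definitional unwinding via the bijection between strings of the Cartesian product language and domain-consistent assignments that the paper has in mind.
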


Context-free grammars are effectively
closed under intersection with regular grammars.
To see this, consider a context-free grammar $G$ in Chomsky 
form and a
regular grammar $R$.  
Following the ``triple construction'', the intersection
grammar has non-terminals of the form $\tuple{F,A,F'}$ where 
$F,F'$ are non-terminals of $R$ and $A$ is a non-terminal
of $G$. Intuitively, $\tuple{F,A,F'}$ generates strings 
$w$ that are generated by $A$ and also by $F$, through a derivation 
from $F$ to $wF'$.
If $A\to BC$ is a production of $G$, then we
add, for all non-terminals $F,F',F''$ of $R$, the production
$\tuple{F,A,F''}\to\tuple{F,B,F'}\tuple{F',C,F''}$.
The resulting grammar is
$O(|G|n^3)$ in size where $n$ is the number of non-terminals of $R$.
This is similar to the construction of Theorem~6.5 in~\cite{hopull79}
which uses push-down automata instead of grammars.
Since emptiness of context-free grammars takes
linear time (cf.~\cite{hopull79}) 
we obtain through Proposition~\ref{p:cfg_grammar}
a cubic time algorithm to check whether a global constraint 
$\CFG([X_1,\ldots,X_n], G)$ has support.
In fact, this shows that we can efficiently propagate 
more complex constraints, such as the conjunction of a context-free
with a regular constraint. Note that if $R$ is a Cartesian product language
then the triple construction generates the same result as
the CYK based propagator for the \CFG constraint~\cite{grammar2,qwcp06}.

We now show that for 
\emph{simple} context-free grammars $G$, detecting
disentailment of the constraint 
$\CFG([X_1,\ldots,X_n], G)$, i.e. testing whether it has a solution, is at least as hard as
parsing an arbitrary context-free grammar.



\begin{theorem}
\label{t:grammar_hard}
Let $G$ be a context-free grammar in Greibach form 
and $s$ a string of length $n$.
One can construct in $O(|G|)$ time a simple context-free grammar
$G'$ and in $O(|G|n)$ time
a Cartesian product language $L(R_{D(X_1),\dots, D(X_n)})$ 
such that $L(G')\cap L(R_{D(X_1),\dots, D(X_n)})\neq\emptyset$ iff $s\in L(G)$. 
\end{theorem}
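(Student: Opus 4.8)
The plan is to make $G$ simple by giving each of its productions its own private terminal symbol, and then to recover the target string $s$ entirely through the choice of Cartesian product language. First I would build $G'$. By Greibach form every production of $G$ has the shape $A \gives a\alpha$ with $a$ a terminal and $\alpha\in N^*$; enumerate them as $p_1,\dots,p_m$, and for $p_j = (A \gives a\alpha)$ introduce a fresh terminal $b_j$ and place $A \gives b_j\alpha$ into $G'$. Each $b_j$ then occurs in exactly one production, so for every non-terminal and terminal of $G'$ there is at most one matching production; hence $G'$ is simple, it is still in Greibach form, and the relabelling is plainly an $O(|G|)$ operation.

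The correspondence I would then exploit is that, because $G$ and $G'$ are in Greibach form, every leftmost derivation step consumes exactly one terminal at the front of the sentential form and replaces one non-terminal by a string of non-terminals. Thus a complete leftmost derivation of a length-$n$ word performs exactly $n$ steps, applying productions $p_{j_1},\dots,p_{j_n}$, and the corresponding derivation in $G'$ spells out the word $b_{j_1}\cdots b_{j_n}$. Reading each $b_j$ back as the original terminal of $p_j$, a word $b_{j_1}\cdots b_{j_n}\in L(G')$ encodes a leftmost derivation of $G$ that outputs the terminal string whose $i$-th letter is the original terminal of $p_{j_i}$; conversely every leftmost derivation of $G$ lifts to such a word of $L(G')$.

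It remains to stamp the target string $s = s_1\cdots s_n$ onto this encoding, which is the role of the Cartesian product language. I would set $D(X_i) = \{\, b_j : p_j \text{ has original terminal } s_i \,\}$, precomputing for each terminal of $T$ the list of productions that use it so that all the $D(X_i)$ can be filled in $O(|G|n)$ time. A word $b_{j_1}\cdots b_{j_n}$ then lies in $L(R_{D(X_1),\dots,D(X_n)})$ exactly when the original terminal of $p_{j_i}$ equals $s_i$ for every $i$. Intersecting with $L(G')$ and invoking the correspondence above, a word of $L(G')\cap L(R_{D(X_1),\dots,D(X_n)})$ is precisely the encoding of a leftmost derivation of $G$ that outputs $s_1\cdots s_n = s$; hence the intersection is non-empty iff $s\in L(G)$, and by Proposition~\ref{p:cfg_grammar} this reduces arbitrary context-free parsing to detecting support for a simple $\CFG$ constraint.

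The individual steps are routine bookkeeping on construction time and size. The one point that needs care — and the only place where Greibach form is genuinely used — is the bijection between leftmost derivations of length-$n$ words and their one-terminal-per-step encodings, since it is exactly this property that lets masking the encoding letterwise through the domains $D(X_i)$ faithfully force the $i$-th derivation step to consume $s_i$.
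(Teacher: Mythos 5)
Your construction is exactly the paper's: the fresh terminal $b_j$ plays the role of the paper's pair $(a,j)$, the modified productions $A\gives b_j\alpha$ match the paper's $A\to(a,j)\alpha$, and the domains $D(X_i)$ are defined identically as the set of new terminals whose underlying original terminal is $s_i$. The argument is correct and follows the same route; if anything, your explicit appeal to the one-terminal-per-leftmost-step property of Greibach form spells out the correspondence slightly more carefully than the paper's own proof does.
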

\begin{proof}
The idea behind the proof is to determinize an unrestricted context free
grammar $G$ by mapping each terminal in $G$ to a set of pairs -- the terminal and a 
production that can consume this terminal. This allows us to carry information about 
the derivation inside a string in $G'$. Then, we construct a Cartesian product 
language $L(R_{D(X_1),\dots, D(X_n)})$ over these pairs so that
all strings from this language map only to the string $s$.  
Let $G=\tuple{N,T,P,S}$ and 
fix an arbitrary order of the productions in $P$.
We now construct the grammar $G'=\tuple{N,T',P',S}$.
For every $1\leq j\leq |P|$, if the $j$-th production
of $P$ is $A\to a\alpha$ then let $(a,j)$ be a new symbol
in $T'$ and let the production $A\to(a,j)\alpha$ be in $P'$.
Next, we construct the  Cartesian product language.
We define $D(X_i) = \{(a,j)| (s_i = a) \wedge (a,j) \in T'\}$, $i=1,\ldots,n$ and $s_i$
is the $i$-th letter of $s$.
Clearly, $G'$ is constructed in $O(|G|)$ time and
$L(R_{D(X_1),\dots, D(X_n)})$ in $O(|P|n)$ time.

 ($\Rightarrow$) Let $L(G')\cap L(R_{D(X_1),\dots, D(X_n)})$ be non empty. 
Then there exits a string $s'$ that belongs to the intersection.
Let $s'=(a_1,i_1)\cdots (a_n,i_n)$.
By the definition of $L(R_{D(X_1),\dots, D(X_n)})$, the string $a_1a_2\cdots a_n$ must 
equal $s$. Since $s'\in L(G')$, there must be a derivation by $G$ of 
the form
\[
S\Rightarrow_{G,p_1}a_1\alpha\Rightarrow_{G,p_2}a_1a_2\alpha'\dots
\Rightarrow_{G,p_n} a_1\cdots a_n
\]
where $p_j$ is the $j$-th production in $P$.
Hence, $s\in L(G)$.

($\Leftarrow$) Let $s\in L(G)$. Consider a derivation sequence of
the string $s$. We replace every symbol $a$ in $s$ that was derived
by the $i$-th production of $G$ by $(a,i)$. 
By the construction of $G'$, the string $s'$ is in
$L(G')$. Moreover, $s'$ is also in $L(R_{D(X_1),\dots, D(X_n)})$.
  \qed
\end{proof}

Note that context-free parsing has a quadratic time lower bound,
due to its connection to matrix multiplication~\cite{lee02}.
Given this lower bound and the fact that the
construction of Theorem~\ref{t:grammar_hard} requires only linear time, 
we can deduce the following.

\begin{corollary}
\label{p:grammar_dc_hardness}
Let $G$ be a context-free grammar. If $G$ is simple
(or deterministic or $LL(1)$) then detecting disentailment of
$\CFG([X_1,\ldots,X_n], G)$ is
at least as hard as context-free parsing of a string of
length $n$.
\end{corollary}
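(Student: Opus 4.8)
The plan is to read the construction of Theorem~\ref{t:grammar_hard} as a many-one reduction from context-free membership testing (i.e.\ parsing) to disentailment detection for a \CFG constraint with a \emph{simple} grammar, and then to argue that this reduction is too cheap to affect the asymptotic complexity. Recall that by Proposition~\ref{p:cfg_grammar} the constraint $\CFG([X_1,\ldots,X_n], G')$ has a support exactly when $L(G')\cap L(R_{D(X_1),\dots,D(X_n)})\neq\emptyset$; equivalently, the constraint is disentailed precisely when this intersection is empty. Deciding disentailment is therefore the same decision problem, up to complementation, as deciding non-emptiness of the intersection.

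First I would start from an arbitrary context-free grammar and string $s$ of length $n$ whose membership we wish to decide; after converting the grammar to Greibach form (a transformation of the fixed grammar, with the polynomial blow-up noted in the Background section) we may assume it is the grammar $G$ of Theorem~\ref{t:grammar_hard}. Applying that theorem yields, in $O(|G|)$ time, a simple grammar $G'$ and, in $O(|G|n)$ time, a Cartesian product language $L(R_{D(X_1),\dots,D(X_n)})$ with the property that $s\in L(G)$ iff $L(G')\cap L(R_{D(X_1),\dots,D(X_n)})\neq\emptyset$. Composing this with the equivalence above, any routine that detects disentailment of $\CFG([X_1,\ldots,X_n], G')$ decides whether $s\in L(G)$, and hence parses $s$.

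It then remains to verify that the reduction does not dominate the cost of parsing. For a fixed grammar the construction runs in $O(n)$ time, which is asymptotically below the $\Omega(n^2)$ matrix-multiplication lower bound for context-free parsing recorded above~\cite{lee02}; thus any disentailment test, composed with this reduction, gives a parser of the same asymptotic running time, so disentailment detection is at least as hard as parsing. Finally, since the grammar $G'$ produced is simple and simple languages are contained in the deterministic and $LL(1)$ classes (as remarked at the start of this section), any disentailment test valid for those broader classes applies in particular to $G'$, so the same lower bound transfers to them verbatim.

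The main point requiring care is aligning the constraint-satisfaction notion of disentailment with emptiness of the language intersection through Proposition~\ref{p:cfg_grammar} and the definition of support, and then confirming that the $O(|G|n)$ construction is genuinely $o(n^2)$ for fixed $G$, so that the hardness survives the reduction. The containment of simple grammars inside the deterministic and $LL(1)$ classes is a standard structural fact and needs no new construction.
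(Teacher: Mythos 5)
Your proposal is correct and follows essentially the same route as the paper's (omitted) proof: convert the given grammar to Greibach form, invoke Theorem~\ref{t:grammar_hard} to obtain a simple grammar and Cartesian product language, identify disentailment with emptiness of the intersection via Proposition~\ref{p:cfg_grammar}, and observe that the $O(|G|n)$ reduction is too cheap to affect the quadratic parsing lower bound. The closing remark that simple languages sit inside the deterministic and $LL(1)$ classes is exactly how the paper extends the result to those cases as well.
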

\myOmit{
\begin{proof}
Here, ``at least as hard'' means, with respect to 
reductions that are polynomial in the size of the grammar
and linear in $n$. We transform a given context-free 
grammar $H$ into Greibach form. If $H$ is in Chomsky
form (as for CYK) then this takes $O(|H|^3)$ time
(otherwise $O(|H|^4)$)
as mentioned in the Background section. According to
Theorem~\ref{t:grammar_hard} we construct a simple grammar $H'$
and define $D(X_i)$ as the letter produced by nonterminal
$A_i$. With Proposition~\ref{p:cfg_grammar}, $s\in L(G)$
iff $\CFG([X_1,\ldots,X_n], G)$ has support.
\qed
\end{proof}
}

We now show the converse to Theorem~\ref{t:grammar_hard}
which reduces intersection emptiness of a context-free
with a regular grammar, to the membership problem
of context-free languages. This
shows that the time complexity of detecting
disentailment for the $\CFG$ constraint is the same
as the time complexity of the best parsing
algorithm for an arbitrary context free grammar. Therefore,
our result
shows that detecting disentailment takes $O(n^{2.4})$ time 
\cite{copwin90},
as in the best known algorithm for Boolean matrix multiplication. 
It does not, however,
improve the asymptotic complexity of a domain consistency 
propagator for the $\CFG$ constraint~\cite{grammar2,qwcp06}.

\begin{theorem}
\label{t:grammar_complete}
Let $G=\langle N,T,P,S \rangle$ be a context-free grammar 
and $L(R_{D(X_1),\dots, D(X_n)})$ be  Cartesian product language.
One can construct in time $O(|G|+|T|^2)$ 
a context-free grammar $G'$ and in time
$O(n|T|)$ a string $s$ such that 
$s\in L(G')$ iff $L(G)\cap L(R_{D(X_1),\dots, D(X_n)})\neq\emptyset$. 
\end{theorem}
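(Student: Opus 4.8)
The plan is to mirror the construction of Theorem~\ref{t:grammar_hard}, but run it in the opposite direction: instead of pushing derivation choices into an enlarged terminal alphabet, I would push all positional domain information into the string $s$, keeping the grammar $G'$ of size $O(|G|+|T|^2)$ and, crucially, independent of $n$. First I would fix an enumeration $T=\{t_1,\dots,t_m\}$ with $m=|T|$ and represent each domain $D(X_i)$ by its characteristic bit-vector over $T$: a block $b_{i,1}\cdots b_{i,m}\in\{0,1\}^m$ with $b_{i,k}=1$ iff $t_k\in D(X_i)$. The target string would be the concatenation $s=b_{1,1}\cdots b_{1,m}\,b_{2,1}\cdots b_{2,m}\,\cdots\,b_{n,1}\cdots b_{n,m}$, of length $nm=O(n|T|)$ and computable in $O(n|T|)$ time.

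For the grammar I would work over the fixed terminal alphabet $\{0,1\}$, introduce a ``free bit'' non-terminal $B\to 0\mid 1$, and add for each $k\le m$ a block gadget $C_k\to \underbrace{B\cdots B}_{k-1}\,1\,\underbrace{B\cdots B}_{m-k}$ that generates exactly the length-$m$ blocks having a forced $1$ in coordinate $k$ and arbitrary bits elsewhere. I would then obtain $G'$ from $G$ by keeping every non-terminal and production but replacing each terminal occurrence $t_k$ in a right-hand side by the non-terminal $C_k$, and adding the $B$- and $C_k$-productions. The renamed $G$-productions contribute $O(|G|)$, the $m$ gadgets contribute $O(m^2)=O(|T|^2)$, and $B$ contributes $O(1)$, giving the claimed $O(|G|+|T|^2)$ construction time. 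Note that $G'$ stays $\varepsilon$-free since each $C_k$ emits at least the forced $1$.

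Correctness would then split into two directions. For the easy direction, given $w=t_{k_1}\cdots t_{k_n}\in L(G)$ with $t_{k_i}\in D(X_i)$, I would take a $G$-derivation of $w$ and run the corresponding $G'$-derivation, expanding the $i$-th occurrence of $C_{k_i}$ so that its free bits reproduce the $i$-th block of $s$; this is possible precisely because $b_{i,k_i}=1$ matches the forced $1$ produced by $C_{k_i}$, so $s\in L(G')$. For the converse, I would argue that in any $G'$-derivation of $s$ every terminal bit originates inside some $C_k$ subtree, each such subtree yields exactly $m$ bits, and yields are read left to right; since $|s|=nm$, the derivation must use exactly $n$ block gadgets, with the $i$-th gadget generating the $i$-th block of $s$. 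Contracting each $C_{k_i}$ subtree back to $t_{k_i}$ recovers a $G$-derivation of a string whose $i$-th letter lies in $D(X_i)$ (the forced $1$ guarantees $b_{i,k_i}=1$), witnessing non-emptiness of $L(G)\cap L(R_{D(X_1),\dots,D(X_n)})$.

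The main obstacle I anticipate is exactly the alignment step in the converse direction: I must verify that a parse of the flat bit-string $s$ is forced to decompose into $n$ consecutive length-$m$ blocks, each the yield of a single $C_k$, with no shifting that lets a gadget straddle a block boundary. This is settled by length bookkeeping---each gadget emits precisely $m$ bits and the total length is $nm$, so the block decomposition is rigid---but it is the point that needs to be stated carefully rather than waved through.
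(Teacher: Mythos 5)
Your construction is the same as the paper's: encode each domain $D(X_i)$ as a length-$|T|$ bitmap, concatenate the bitmaps to form $s$, and replace each terminal of index $k$ by a gadget non-terminal generating exactly the blocks with a forced $1$ in position $k$ (the paper calls your $C_k$ gadgets $T_j$ and your $B$ likewise $B$). Your correctness argument, including the block-alignment step via length bookkeeping, matches the paper's (omitted) reasoning, and your version is in fact slightly more careful in that you substitute every terminal occurrence rather than only the leading one of a production $A\to a\alpha$, so no Greibach-form assumption is needed.
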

\begin{proof} (Sketch)
We assign an index to each terminal in $T$. 
For each position $i$ of the strings of $R$, we create a bitmap of
the alphabet that describes the terminals that may appear in that
position. The $j$-th bit of the bitmap 
is $1$ iff the symbol with index $j$ may appear at position $i$.
The string $s$ is the concatenation 
of the bitmaps for each position and has size $n|T|$.
First, we add $B\to 0$ and $B\to 1$ to $G'$.
For each terminal in $T$ with index $j$, 
we introduce 
$T_j\to B^{j-1}1B^{|T|-j}$
into $G'$ to accept any bitmap with 1 at the
$j$-th position. 
Then, for each production in $G$ of the form $A\to a\alpha$  
such that the index of $a$ is $j$, we add
$A\to T_j\alpha$  to $G'$.
In this construction, every production in $G'$ except for those with
$T_i$ on the left hand side can be uniquely mapped to a production in $G$. 
It can be shown that $s\in L(G')$ iff $L(G)\cap L(R_{D(X_1),\dots, D(X_n)}) \neq\emptyset$. 
\myomit{
($\Rightarrow$) Suppose $s' \in L(G')$. Consider a derivation sequence
of $s'$. We construct a string $s \in L(G)\cap L(R)$. Note that
every string that belongs to $L(G')$ has length $k|T|$ for some $k$.
Moreover, it is partitioned in blocks of $|T|$ symbols and every block
of is generated by one of the non-terminals $T_p$. We construct $s$ by
placing at the $j$-th position the symbol with index $i$ if the
$j$-th block of symbols of $s'$ was generated by the non-terminal
$T_i$. Clearly this belongs to $R$. 
Then, a derivation of $s$ in $G$ can be created by
removing from the derivation of $s'$ productions with $T_i$ on the
left hand side and replacing the rest with the corresponding
production in $G$, so $s\in L(G)\cap L(R)$ and  
$L(G)\cap L(R)\neq \emptyset$.

($\Leftarrow$) Suppose $L(G) \cap L(R)\neq\emptyset$ and let $s$
be a string in the intersection. We construct a string $s'$ that
belongs to $L(G')$ by replacing the symbol with index $i$ 
at position $j$ of $s$ with a block of $|T|$ symbols with 1 at the
$i$-th position and 0 elsewhere. 
We create a derivation of $s'$ in $G'$ by replacing each production 
$A\gives a \alpha$ in it with the pair 
of productions $A \gives T_i \alpha$, $T_i \gives (0|1)^{i-1}1(0|1)^{|T|-i}$
where $i$ is the index of $a$. Thus, $s \in L(G')$.}
\qed
\end{proof}

\section{Linear Context-Free Grammars}

A context-free grammar is \emph{linear} if every production contains
at most one non-terminal in its right-hand side.
The linear languages are a proper superset of the regular languages
and are a strict subset of the context-free languages.
Linear context-free grammars possess two important properties:
(1) membership of a given string of length $n$ can be 
checked in time $O(n^2)$ (see Theorem~12.3 in~\cite{wagwec86}), and
(2) the class is closed under intersection with
regular grammars (to see this, apply the
``triple construction'' as explained
after Proposition~\ref{p:cfg_grammar}).
The second property opens the possibility of constructing a polynomial
time propagator for a conjunction of the the linear $\CFG$
and the $\regular$ constraints. 
 Interestingly, we can show that a CYK-based propagator for this
type of grammars runs in quadratic time. This is 
then the third example of a grammar, besides regular and context-free
grammars, where the asymptotic time complexity 
of the parsing algorithm and that of the corresponding 
propagator are equal. 

\begin{theorem}
\label{t:andor_linear}
Let $G$ be a linear grammar and $\CFG([X_1,\ldots,X_n],G)$
be the corresponding global constraint.  
There exists a domain consistency propagator for this constraint that
runs in $O(n^2|G|)$ time. 
\end{theorem}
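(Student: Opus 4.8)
The plan is to mimic the \CYK-based propagator for the general \CFG constraint, but to exploit linearity so that the cubic factor collapses to a quadratic one. First I would put $G$ into a normal form analogous to Chomsky form but tailored to linear grammars: every production has the form $A\to aB$, $A\to Ba$, or $A\to a$, with $a\in T$ and $A,B\in N$. A production $A\to uBv$ (respectively $A\to w$) with $u,v,w\in T^*$ is broken up by peeling terminals one at a time from the left and from the right, introducing one fresh non-terminal per peeled terminal. Exactly as in the Chomsky-form conversion cited in the Background section, this causes only a linear blow-up, so the resulting grammar $G'$ satisfies $|G'|=O(|G|)$ and $L(G')=L(G)$.

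Next I would build the \CYK \andor graph over $G'$ and the current domains. The OR-nodes are triples $\langle A,i,j\rangle$ with $A\in N$ and $1\le i\le j\le n$, asserting that $A$ derives some string consistent with $D(X_i)\cdots D(X_j)$; the AND-nodes are instances of productions applied to intervals. The decisive point is that, because $G'$ is linear, each production instance refers to \emph{exactly one} smaller interval. A node $\langle A,i,j\rangle$ is an OR over: for each production $A\to aB$, an AND-node checking $a\in D(X_i)$ with child $\langle B,i+1,j\rangle$; for each $A\to Ba$, an AND-node checking $a\in D(X_j)$ with child $\langle B,i,j-1\rangle$; and, when $i=j$, for each $A\to a$, a leaf checking $a\in D(X_i)$. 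There is no split point to quantify over: linearity removes the third index that produces the cubic cost in the general triple construction. Hence each of the $O(n^2)$ intervals contributes at most $O(|P|)$ AND-nodes, each with $O(1)$ outgoing edges, so the whole graph has size $O(n^2|G'|)=O(n^2|G|)$.

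Then I would run the standard two-pass evaluation of this monotone \andor graph. A bottom-up pass marks each node that is \emph{satisfiable} under the current domains (an OR-node when some child is marked, an AND-node when its unique interval child and its domain check are both satisfied); by Proposition~\ref{p:cfg_grammar} this decides support through the root $\langle S,1,n\rangle$. A subsequent top-down pass from $\langle S,1,n\rangle$, descending only through satisfiable nodes, marks the nodes that are \emph{relevant}, i.e.\ that participate in some full derivation of a string in $L(G')\cap L(R_{D(X_1),\dots,D(X_n)})$. A value $a\in D(X_i)$ then has a support iff the leaf (or domain-check) node for $a$ at position $i$ is both satisfiable and relevant; deleting all unsupported values establishes domain consistency. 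Each pass visits every node and edge a constant number of times, so it runs in time linear in the graph, namely $O(n^2|G|)$.

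The main obstacle is the second step: arguing rigorously that linearity lets us dispense with the split index. In the general \CFG construction an AND-node for $A\to BC$ on interval $[i,j]$ must range over all split points $i\le k<j$, contributing the extra factor of $n$; I must show that for linear productions the sole non-terminal occupies a determined sub-interval (the original interval minus the peeled terminal), so that no such quantification arises and the \andor semantics still faithfully captures membership. Given this, the size bound and the linear-time graph evaluation are routine, and the matching $O(n^2)$ parsing bound (Theorem~12.3 in~\cite{wagwec86}) confirms that the propagator is asymptotically no worse than parsing.
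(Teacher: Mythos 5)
Your proposal is correct and follows essentially the same route as the paper: convert to the linear normal form $A\to aB$, $A\to Ba$, $A\to a$, run the two-pass (bottom-up support, top-down reachability) CYK-based propagator, and observe that linearity eliminates the split-point index so each of the $O(n^2)$ table entries contributes only $O(F(A))$ rather than $O(nF(A))$ one-step derivations. Your \andor-graph phrasing and the paper's dynamic-programming-table phrasing are two presentations of the same algorithm, and the point you flag as the ``main obstacle'' is exactly the observation the paper relies on.
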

\begin{proof}
We convert $G=\tuple{N,T,P,S}$ into CNF. 
Every linear grammar can be converted into the form
$A \gives aB$, $A\gives Ba$ and $A\gives a$,  where $a,b \in T$ and  $A,B\in N$
(see Theorem~12.3 of~\cite{wagwec86}) in $O(|G|)$ time. To obtain CNF
we replace every terminal $a\in T$ that occurs in a production on
the right hand side with a new non-terminal $Y_a$ and introduce
a production $Y_a \gives a$.

Consider the CYK-based domain consistency propagator for an arbitrary context-free grammar
constraint~\cite{qwcp06,grammar2}. The algorithm runs in two stages. In the first stage,
it constructs in a bottom-up fashion
a dynamic programing table $V_{n \times n}$,
where an element $A$ of $V_{i,j}$ is a potential non-terminal
that generates a substring from the domains of variables $[X_{i},\ldots,X_{i+j}]$.
In the second stage, it
performs a top-down traversal of $V$ 
and marks an element $A$ of  $V_{i,j}$ iff
it is reachable from the starting non-terminal $S$
using productions of the grammar and elements
of $V$.
It then removes unmarked elements, including terminals.
If it removes a terminal at column $i$ of the table, it prunes 
the corresponding value
of variable $X_i$.
%

The complexity of this algorithm is bounded by the number of possible 1-step derivations
from each non-terminal in the table.  
Let $G'=\tuple{N',T',P',S'}$ be an arbitrary 
context free grammar.
There are $O(|N'|n^2)$ non-terminals in
the table and each non-terminal can be expanded in $O(F'(A)n)$
possible ways, where $F'(A)$ is the number of 
productions in $G'$ with non-terminal $A$ on the left-hand side.
Therefore, the total time complexity of the propagator 
for unrestricted context-free grammars is 
$n^2\sum_{A \in N'} {nF'(A)} = O(n^3|G'|)$.
In contrast, 
the number of possible 1-step derivations 
from each non-terminal
in linear grammars 
is bounded by $O(F(A))$.
Therefore, the propagator runs in $O(n^2|G|)$ for a linear grammar $G$.
\myomit{We first change $G$ so that all its productions are of the form
$A \gives aB$, $A\gives Ba$ and
$A\gives a$,  where $a,b \in T$ and  $A,B\in N$
(see Theorem~12.3 of~\cite{wagwec86}). 
This is easily done in time $O(|G|)$: replace any
production $A\to a_1a_2\cdots a_lBb_kb_{k-1}\cdots b_1$ with
$l+k\geq 2$ by the new productions
$A\to a_1A_2$, $A_2\to a_2A_3$, \dots,
$A_{l-1}\to a_{l-1}A_l$, $A_l\to a_lB_1$ and
$B_1\to B_2b_1$, \dots, $B_{k-1}\to B_kb_{k-1}$,
$B_k\to Bb_k$. Similarly, productions with right-hand sides
in $T^+$ of length $\geq 2$ can be replaced. The new
grammar has size $\leq 3|G|$.
Consider the propagator for an arbitrary context-free grammar
constraint proposed in~\cite{qwcp06,grammar2}. This algorithm requires  
the input grammar to be in Chomsky form. 
We convert a linear grammar in Chomsky form in the following way. 
For each terminal $a\in T$ that occurs in a production with a non-terminal on
the right hand side, like $A \gives Ba$, we introduce a non-terminal
$Y_a$ and a production $Y_a \gives a$. Then we replace each occurrence
of a terminal $a$ with the non-terminal $Y_a$. This transformation
does not change the language generated by the grammar $G$
and does not increase the size of the CYK table, because the 
non-terminals $Y_a$ match string of length one only.
The domain consistency propagator for the $\CFG$ constraint is based on the CYK
parser. Similar to the parser, it constructs the dynamic programming
table of non-terminals that can be appear in derivations of strings of
length $n$ from the language $L(G)$.   
The complexity of this algorithm is bounded 
by the number of possible ways to
expand each non-terminal in the table.
%
%
We know that the total number of non-terminals 
is at most $O(n^2|N|)$.
We observe that the number of possible 1-step derivations 
from each of these non-terminals is bounded by $O(F(A))$, 
where $F(A)$ is the number of 
productions with non-terminal $A$ on the left-hand size.
This is in contrast to unrestricted context-free grammars, where
each non-terminal can be expanded in $O(n|F(A)|)$ different ways.
Therefore, the total time complexity of the propagator
is  $n^2\sum_{A \in N} {F(A)} = O(n^2|G|).$
}
\qed
\end{proof}

Theorem~\ref{t:andor_linear} can be extended to the 
weighted form of the linear $\CFG$ constraint, $\WCFG$~\cite{katnarwal08}. 
A weighted grammar is annotated with a weight for each production
and the weight of a derivation is the sum of all weights used in it.
The linear $\WCFG(G,Z,[X_1,\ldots,X_n])$ constraint
holds iff an assignment $X$ forms a string belonging to 
the weighted linear grammar $G$ and 
the minimal weight derivation of $X$ is less than or equal to $Z$. 
The domain consistency propagator for the $\WCFG$ constraint 
is an extension of the propagator for $\CFG$
that computes additional information for each non-terminal $A \in V_{i,j}$---the 
minimum and the maximum weight derivations from $A$. 
Therefore, this algorithm has the same time and space asymptotic complexity 
as the propagator for $\CFG$, so
the complexity analysis for the linear $\WCFG$ 
constraint 
is identical to the non-weighted case.   

It is possible to restrict linear grammars further, so that
the resulting global constraint problem is solvable in 
\emph{linear time}. As an example, consider
``fixed-growth'' grammars in which there exists
$l$ and $r$ with $l+r\geq 1$ such that every production 
is of the form either $A\to w\in T^+$ or 
$A\to uBw$ where the length of $u\in T^*$ equals $l$ and
the length of $w\in T^*$ equals $r$. 
In this case, the triple construction 
(explained below Proposition~\ref{p:cfg_grammar}) generates
$O(|G|n)$ new non-terminals implying linear time
propagation (similarly, CYK runs in linear time 
as it only generates non-terminals on 
the diagonal of the dynamic program).
A special case of fixed-growth grammars are
regular grammars which have $l=1$ and $r=0$ (or vice versa).

\section{The \edit Constraint}

To illustrate linear context-free grammars,
we show how to encode an edit distance constraint into such
a grammar. 
$\edit([X_1,\ldots,X_n, Y_1,\ldots,Y_m], N)$
holds iff the edit distance between assignments of 
two sequences of variables $\Xbf$ and $\Ybf$ is less than or equal to $N$. 
The edit distance is the minimum number of deletion,
insertion and substitution operations required
to convert one string into another. Each of these operations 
can change one symbol in a string. W.L.O.G. we assume that $n = m$. 
We will show that the $\edit$  constraint can be encoded 
as a linear $\WCFG$ constraint. The idea of the encoding 
is to parse matching substrings using productions of weight 0
and to parse edits
using productions of weight 1.

We convert $\edit([\Xbf, \Ybf], N)$ 
into a linear 
$\WCFG([\Zbf_{2n+1}, N, G_{ed})$ constraint. 
The first $n$ variables in the sequence $\Zbf$ are equal to the 
sequence $\Xbf$, 
the variable $Z_{n+1}$ is ground to the sentinel symbol $\#$ 
so that the grammar can 
distinguish the sequences $\Xbf$ and $\Ybf$, and 
the last $n$ variables of the sequence $\Zbf$
are equal to the reverse of the sequence $\Ybf$.
We define the linear weighted grammar $G_{ed}$ as follows.
Rules $S \gives d S d$ with weight $w = 0$, $\forall d \in D(X) \cup D(Y)$,
capture matching terminals, rules $S \gives  d_1 S d_2$ with $w=1$, $\forall d_1 \in D(X), d_2 \in D(Y), d_1 \neq d_2$,
capture replacement, rules $S \gives  d S|S d$ with  $w=1$, $\forall d \in D(X)$,
capture insertions and deletions. Finally, the rule $S \gives  \#$ with weight $w = 0$
generates the sentinel symbol.
%
%
%
%
%
As discussed in the previous section,
the propagator for the linear $\WCFG$ constraint
takes $O(n^2|G|)$ time.
%
Down a branch of the search tree, the time complexity is
$O(n^2|G|ub(N))$.

We can use this encoding of
the $\edit$ constraint into a linear $\WCFG$
constraint to construct propagators for
more complex constraints. For instance,
we can exploit the fact that linear
grammars are closed under intersection
with regular grammars to propagate
efficiently the conjunction of an
\edit constraint and \regular constraints
on each of the sequences $\Xbf$,$\Ybf$.
More formally, let $\Xbf$ and $\Ybf$ be two sequences of variables of
length $n$ subject to the constraints $\regular(\Xbf, R_1)$,
$\regular(\Ybf, R_2)$ and $\edit(\Xbf, \Ybf, N)$.
We construct a domain consistency propagator for the conjunction of these
three constraints, by computing a grammar that generates 
strings of length $2n+1$ which satisfy the conjunction.
First, we construct an automaton
that accepts $ \calL(R_1)\#\calL(R_2)^R$. 
%
This language is regular and requires an automaton 
of size $O(|R_1|+|R_2|)$.
Second, we intersect this with
the linear weighted grammar that encodes the 
$\edit$ constraint using the ``triple construction''.
The size of the obtained grammar is $G_{\wedge} =
|G_{ed}|(|R_1|+|R_2|)^2$ and this grammar is a weighted linear
grammar. Therefore, we can use the linear $\WCFG(\Zbf, N, G_{\wedge})$
constraint to encode the conjunction.
Note that the size of $G_{\wedge}$ is only quadratic in $|R_1|+|R_2|$,
because $G_{ed}$ is a linear grammar. The time complexity to
enforce domain consistency on this conjunction of constraints is
$O(n^2|G_{\wedge}|) = O(n^2d^2(|R_1|+|R_2|)^2)$ for each invocation
and $O(n^2d^2(|R_1|+|R_2|)^2ub(N))$ down a branch of the search tree.
\begin{table}[htb]
\begin{center}
{\small
\caption{\label{t:t1} 
Performance of the encoding
into $\WCFG$ constraints shown in: 
number of instances solved in 60 sec / average number of choice points 
/ average time to solve.
}
\begin{tabular}{| rrr|ccc|ccc|}
\hline
&
\multicolumn {2}{r|} {$n$ ~~ $N$}
&\multicolumn {3}{|c|}{$ED_{Dec}$}
&\multicolumn {3}{|c|}{$ED_{\wedge}$} \\
\hline 
 & 
 &
 &
 \#solved &
 \#choice points&
 time&
 \#solved &
 \#choice points&
 time\\
\hline 
&\multicolumn {2}{r|} {15~~~~2} & \textbf{100}&  \textbf{     29} & \textbf{  0.025} & \textbf{100}&      6 &  0.048 \\ 
&\multicolumn {2}{r|} {20~~~~2}  & \textbf{100}&    661 &  0.337 &  \textbf{100}&  \textbf{      6} & \textbf{  0.104} \\ 
&\multicolumn {2}{r|} {25~~~~3}  & 93 &   2892 &  2.013 & \textbf{100}&  \textbf{     10} & \textbf{  0.226} \\ 
&\multicolumn {2}{r|} {30~~~~3}  & 71 &   6001 &  4.987 & \textbf{100}&  \textbf{     12} & \textbf{  0.377} \\ 
&\multicolumn {2}{r|} {35~~~~4}  & 58 &   5654 &  6.300 & \textbf{100}&  \textbf{     17} & \textbf{  0.667} \\ 
&\multicolumn {2}{r|} {40~~~~4}  & 40 &   3140 &  4.690 & \textbf{100}&  \textbf{     17} & \textbf{  0.985} \\ 
&\multicolumn {2}{r|} {45~~~~5}  & 36 &   1040 &  2.313 & \textbf{100}&  \textbf{     19} & \textbf{  1.460} \\ 
&\multicolumn {2}{r|} {50~~~~5}  & 26 &   1180 &  4.848 & \textbf{100}&  \textbf{     24} & \textbf{  1.989} \\ 
\hline 
\hline 
\multicolumn {3}{|r|} { TOTALS }& & & & & &\\ 
\multicolumn {3}{|r|} {solved/total}& \multicolumn {3}{|c|} {  524 /800}& \multicolumn {3}{|c|} {\textbf{800} /800}\\ 
\multicolumn {3}{|r|} {avg time for solved}& \multicolumn {3}{|c|}{  2.557} & \multicolumn {3}{|c|}{\textbf{  0.732}} \\ 
\multicolumn {3}{|r|} {avg choice points for solved}& \multicolumn {3}{|c|}{   2454} &\multicolumn {3}{|c|}{\textbf{    14}} \\ 
\hline 
\end{tabular}}
\end{center}
\end{table}

To evaluate the performance of the $\WCFG(\Zbf, N, G_{\wedge})$ constraint 
we carried out a series  of experiments on random problems.
In out first model the conjunction of the \edit constraint and two \regular constraints
was encoded with a single $\WCFG(\Zbf, N, G_{\wedge})$ constraint. We call this model 
$ED_{\wedge}$. The second model contains the \edit constraint, encoded as $\WCFG(\Zbf, N, G_{ed})$, 
and two \regular constraints.  
The \regular constraint for the model $ED_{Dec}$ is implemented 
using a decomposition into ternary table constraints~\cite{qwcp06}. 
The $\WCFG$
constraint is implemented with an incremental
monolithic propagator~\cite{katnarwal08}.
The first $\regular$ constraint ensures that there are at most two consecutive
values one in the sequence. The second encodes
a randomly generated string of $0$s and $1$s. 
To make problems harder, we enforced the $\edit$ constraint
and the \regular constraints on  two sequences 
$\Xbf\#(\Ybf)^R$ and $\Xbf'\#(\Ybf')^R$ of the same length $2n+1$.
The \edit constraint and the first \regular constraint are identical for these
two sequences, while $\Ybf$ and $\Ybf'$ correspond to different randomly generated strings of 0s and 1s. 
Moreover, $\Xbf$ and $\Xbf'$ overlap on $15\%$ of randomly chosen variables. 
For each possible value of $n\in \{15,20,25,30,35,40,45,50\}$,
we generated 100 instances. Note that $n$ is the length of
each sequence $\Xbf$, $\Ybf$, $\Xbf'$ and $\Ybf'$. $N$ is the maximum edit
distance between $\Xbf$ and $\Ybf$ and  between $\Xbf'$ and $\Ybf'$.
We used a random value and variable ordering and a time out of $60$ sec.
Results for different values of $n$ are presented in Table~\ref{t:t1}.
As can be seen from the table, the 
model $ED_{\wedge}$ significantly outperforms the model  $ED_{Dec}$  for larger problems,
but it is slightly slower for smaller problems. 
Note that the model $ED_{\wedge}$ solves many more instances compared to  
$ED_{Dec}$. 

\vspace*{-5pt}
\section{Conclusions}

Unlike parsing,
restrictions on context free grammars such as determinism do not
improve the efficiency of propagation of the
corresponding global \CFG constraint.
On the other hand, one specific syntactic restriction,
that of linearity, allows propagation in 
quadratic time. We
demonstrated an application of such a restricted grammar
in encoding the \edit constraint and more complex constraints.

\vspace*{-5pt}
\bibliographystyle{abbrv}

\begin{thebibliography}{10}

\bibitem{blukoc99}
N.~Blum and R.~Koch.
\newblock Greibach normal form transformation revisited.
\newblock {\em Inf. Comput.}, 150:112--118, 1999.

\bibitem{copwin90}
D.~Coppersmith and S.~Winograd.
\newblock Matrix multiplication via arithmetic progressions.
\newblock {\em J. Symbolic Comput.}, 9:251--280, 1990.

\bibitem{hopull79}
J.~W. Hopcroft and J.~D. Ullman.
\newblock {\em Introduction to automata theory, languages, and computation}.
\newblock Addison-Wesley, 1979.

\bibitem{KS08}
S.~Kadioglu and M.~Sellmann.
\newblock Efficient context-free grammar constraints.
\newblock In {\em AAAI}, pages 310--316, 2008.

\bibitem{katnarwal08}
G.~Katsirelos, N.~Narodytska, and T.~Walsh.
\newblock The weighted {CFG} constraint.
\newblock In {\em CPAIOR}, pages 323--327, 2008.

\bibitem{gkatsiCPAIOR09}
G.~Katsirelos, N.~Narodytska, and T.~Walsh.
\newblock Reformulating global grammar constraints.
\newblock In {\em CPAIOR09}, pages 132--147, 2009.

\bibitem{lagerkvistthesis}
M.~Lagerkvist.
\newblock {\em Techniques for Efficient Constraint Propagation}.
\newblock PhD thesis, KTH, Sweden, 2008.

\bibitem{lee02}
L.~Lee.
\newblock Fast context-free grammar parsing requires fast boolean matrix
  multiplication.
\newblock {\em J. ACM}, 49:1--15, 2002.

\bibitem{pesant1}
G.~Pesant.
\newblock A regular language membership constraint for finite sequences of
  variables.
\newblock In {\em CP}, pages 482--495, 2004.

\bibitem{QuimperAAAI08}
C.~Quimper and T.~Walsh.
\newblock Decompositions of grammar constraints.
\newblock In {\em AAAI}, pages 1567--1570, 2008.

\bibitem{qwcp06}
C.~G. Quimper and T.~Walsh.
\newblock Global grammar constraints.
\newblock In {\em CP}, pages 751--755, 2006.

\bibitem{qwcp07}
C.~G. Quimper and T.~Walsh.
\newblock Decomposing global grammar constraints.
\newblock In {\em CP}, pages 590--604, 2007.

\bibitem{flhandbook}
G.~Rozenberg and A.~Salomaa.
\newblock {\em Handbook of Formal Languages}, volume~1.
\newblock Springer, 2004.

\bibitem{grammar2}
M.~Sellmann.
\newblock The theory of grammar constraints.
\newblock In {\em CP}, pages 530--544, 2006.

\bibitem{wagwec86}
K.~Wagner and G.~Wechsung.
\newblock {\em Computational Complexity}.
\newblock Springer, 1986.

\end{thebibliography}

\end{document}